\newtheorem{theorem}{Theorem}
\newtheorem{proposition}{Proposition}
\newtheorem{lemma}{Lemma}
\newtheorem{definition}{Definition}
\newsavebox{\savepar}
\title{CoFrNets: Interpretable Neural Architecture Inspired by Continued Fractions}
\author{%
  Isha Puri\thanks{Work done as part of internship at IBM Research.} \\
  Harvard University\\
  \texttt{ishapuri@college.harvard.edu}
   \And
  Amit Dhurandhar\\
  IBM Research\\
  \texttt{adhuran@us.ibm.com}
  \And
  Tejaswini Pedapati\\
  IBM Research\\
  \texttt{tejaswinip@us.ibm.com}
  \And
  Karthikeyan Shanmugam\\
  IBM Research\\
  \texttt{karthikeyan.shanmugam2@ibm.com}
  \And
  Dennis Wei\\
  IBM Research\\
  \texttt{dwei@us.ibm.com}
  \And
  Kush R. Varshney\\
  IBM Research\\
  \texttt{krvarshn@us.ibm.com}
}
\begin{document}

\maketitle

\begin{abstract}
In recent years there has been a considerable amount of research on local post hoc explanations for neural networks. However, work on building interpretable neural architectures has been relatively sparse. In this paper, we present a novel neural architecture, CoFrNet, inspired by the form of continued fractions which are known to have many attractive properties in number theory, such as fast convergence of approximations to real numbers. We show that CoFrNets can be efficiently trained as well as interpreted leveraging their particular functional form. Moreover, we prove that such architectures are universal approximators based on a proof strategy that is different than the typical strategy used to prove universal approximation results for neural networks based on infinite width (or depth), which is likely to be of independent interest. We experiment on nonlinear synthetic functions and are able to accurately model as well as estimate feature attributions and even higher order terms in some cases, which is a testament to the representational power as well as interpretability of such architectures. To further showcase the power of CoFrNets, we experiment on seven real datasets spanning tabular, text and image modalities, and show that they are either comparable or significantly better than other interpretable models and multilayer perceptrons, sometimes approaching the accuracies of state-of-the-art models.
\end{abstract}

\section{Introduction}

\begin{quote}
    ``It is simple. The minute I heard the problem, I knew that the answer was a continued fraction. Which continued fraction, I asked myself. Then the answer came to my mind.'' 
\end{quote}
This was the response of the mathematics genius Ramanujan to Mahalanobis, who was astounded how he was able to solve the difficult Strand puzzle \cite{ramstrnd} almost instantaneously. Besides showcasing the genius of Ramanujan, the puzzle also showcases the power of continued fractions. Continued fractions (CFs), typically represented as a sequence that looks like a ladder: $a_0+\frac{b_1}{a_1+\frac{b_2}{a_2+\cdots}}$, can represent any real number and any analytic function, including trignometric functions, polynomials, the exponential function, power functions, and special functions like the gamma, hypergeometric, and Bessel functions \cite{anlytfn}. To represent arbitrary real numbers, it is sufficient for the $a_k$s to be non-negative integers and for the $b_k=1$. Analytic functions, which can be represented as a power series, can also be represented in this form. Moreover, CFs are the best rational approximations to a number/function in a certain sense \cite{cfchap}. Rational approximations are obtained by curtailing the fraction just before the ``+'' sign. For instance, in the example CF above $a_0$, $\frac{a_0a_1+b_1}{a_1}$, and $\frac{a_0a_1a_2+a_0b_2+a_2b_1}{a_1a_2+b_2}$ are three different rational approximations. Additional properties of CFs are discussed in Section \ref{sec:prelim}. 

Given the desirable properties of CFs and noticing their ladder-like structure, we propose a neural architecture inspired by CFs illustrated in Figure \ref{fig:intro-cofrnet}. In place of the $a_k$s, linear functions of the input $x\in \mathbb{R}^p$ are computed by taking the inner product of $x$ with weight vector $w_k\in \mathbb{R}^p$ in each layer $k$ (or step of the ladder).\footnote{A constant term is assumed to be absorbed in $x$ for clearer exposition.} The reciprocal of the function thus far is applied as a nonlinearity in each layer. We refer to this \textbf{Co}ntinued \textbf{Fr}action-inspired neural network as CoFrNet (pronounced `coffer net'). (Like coffers in building architecture \cite{coffer}, the proposed neural architecture is made up of repeating structures.) Although more complicated functions than linear could be used in each layer, we show in Section \ref{sec:uniapprx} that linear functions are sufficient for universal approximation with a finite number of such ladders. The proof follows a different strategy than typical results on universal approximation of neural networks that 
rely on the results of Cybenko \cite{cybenko1989approximation}, Hornik \cite{hornik1993some}, and Zhou \cite{zhouunivapprox}, and may be of independent interest.

\begin{wrapfigure}{r}{0.5\textwidth} 
\vspace{-6mm}
\begin{center}
 \includegraphics[width=0.6\textwidth]{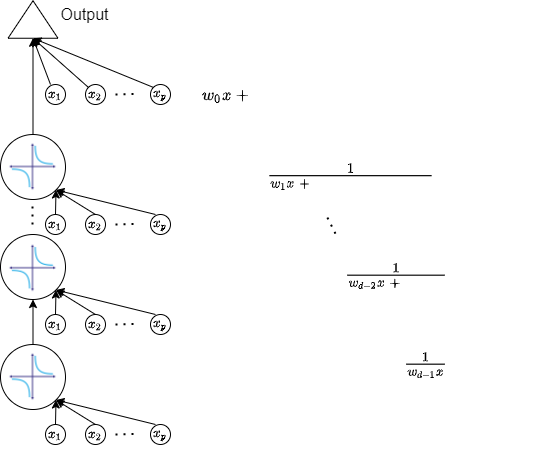} 
 \caption{Single ladder (depth $d$) CoFrNet architecture on the left. On the right we see the corresponding function computed at each stage.} \label{fig:intro-cofrnet}
\end{center} 
\vspace{-6mm}
\end{wrapfigure}

The proposed architecture is ``simple'': there are only $p$ weights to be learned in each layer as opposed to a quadratic number for standard architectures such as multilayer perceptron (MLP) or those with densely connected layers.\footnote{See the supplement for the  exact quantification of the number of weights.} A key differentiation from other neural architectures is that the input is passed into every layer \cite{kerasmnistMLP,resnet,kerasmnistCNN,transf}.\footnote{Skip connections such as those seen in residual network-type architectures may end up passing the input to upper layers, but it is unlikely that the input would be consistently passed undisturbed to all upper layers.} Moreover, the nonlinearity $\frac{1}{z}$ is different from more commonly used nonlinearities such as sigmoid, ReLU, and polynomials. The CF representation is much more compact than directly representing a polynomial of the same degree, which requires exponentially many coefficients. We later show how the CF representation for analytic functions permits the architecture to be made human-interpretable. 

Simply being able to represent a rich class of functions does not imply effective learnability. (After all, unlike Ramanujan, the answer will not simply come to the machine's mind.) However, we empirically demonstrate that learning this function class is indeed possible. We propose variants of the base architecture catered toward ease of interpretation and efficiency of training, while still minimizing generalization error. We apply CoFrNets to tabular, text, and image data, and show they are either competitive or significantly better than other interpretable models and MLPs. In addition, we are not only able to model synthetic data generated from complicated nonlinear functions accurately, but also obtain feature attributions and  recover the functional form reasonably well by leveraging properties of the architecture.

In summary, the main contribution of this paper is a new architecture covering an interesting and rich function class. By taking advantage of properties studied from the very beginnings of formal mathematics, we have stumbled upon a simple, yet powerful new idea in neural architecture design with much promise for accuracy, interpretability, and even efficiency. In this initial paper, we believe we have only scratched the surface of the CoFrNet architecture. Different training strategies and architecture variants as discussed briefly in Sections \ref{sec:archi} and \ref{sec:disc}, among other enhancements, may lead to even better performance in future work.


\section{Related Work}
\label{sec:relw}

There are numerous types of explainable machine learning methods. Given our proposal of an interpretable neural architecture, we focus our discussion of related work on methods that yield global explanations or interpretable models or neural architectures, even though the latter may be opaque. 

\noindent\textbf{Black-Box Neural Architectures.} MLP is a standard neural architecture typically composed of a fully-connected network \cite{kerasmnistMLP}. However, MLPs have limitations in their representation and performance, leading to many modern architectures. Convolutional neural networks (CNNs) \cite{kerasmnistCNN} have been very successful in modeling image data. Further improvements were seen with residual network (ResNet) \cite{resnet} type of architectures which employ ideas such as skip connections. This idea is now used in many other architectures such as DenseNet \cite{densenet} and MobileNetV2 \cite{mobilenet}. Transformers \cite{transf} have seen immense success for text data and recently were also shown to perform well for images \cite{transfimg}. $\Pi$-nets \cite{pinets} were recently shown to perform well for images where they learn a high degree polynomial using tensor decomposition to reduce dimensionality of the search space. Neural networks with activation functions other than the standard ReLU or sigmoid such as those based on Pad$\acute{\text{e}}$ approximation \cite{padenets} have also been suggested. Other architectures such as pi-sigma networks \cite{pisigmanets} and capsule networks \cite{capsulenets} have also found wide appeal.

\noindent\textbf{Globally Interpretable Models.} Standard machine learning models such as logistic regression, decision trees
\cite{cart}, and rule based models \cite{Rudin2019,BDR} are globally interpretable. Generalized additive models (GAMs) \cite{Caruana:2015}
and their more recent variants such as neural additive models (NAMs) \cite{nam} and explainable boosted machines (EBMs) \cite{ebm} also belong to this category. LassoNet \cite{lassonet} is one of the most recently proposed architectures that can be considered interpretable. However, it is restrictive in the sense that if a feature is not selected by itself, it will not appear in any interaction terms. This precludes accurate estimation of functions which have only interaction terms including the very simple bivariate function $x_1x_2$.

\noindent\textbf{Local to Global Post Hoc Methods.} 
There are post hoc explanation methods which take local explanations and create global ones. TreeShap \cite{treeshap} creates a global SHAP explanation for tree based models. While, model agnostic multilevel explanation (MAME) \cite{mame} can create global LIME explanations. Alternatively, the global Boolean feature learning (GBFL) \cite{gbfl} method leverages local constrastive explanations \cite{CEM} to create globally interpretable rule-based models.

\noindent\textbf{Self-Explaining Models.} Another category of models may not be globally interpretable, but provides local explanations without post hoc mechanisms. \cite{selfEx} is suited for tabular and image data, whereas \cite{selfex-coop} is suitable for text data. \cite{TED} is a framework that provides explanations for new examples if explanations are available for training examples. All of these methods, however, do not readily expose the global behavior of the model.

\section{Preliminaries}
\label{sec:prelim}

We now introduce some notation and discuss equivalent forms for representing continued fractions. We also discuss some of their properties. As mentioned in the introduction, the generalized form for a continued fraction is $a_0+\frac{b_1}{a_1+\frac{b_2}{a_2+\cdots}}$, where $a_k$s and $b_k$s can be complex numbers. If none of the $a_k$ or $b_k$ are zero $\forall k\in \mathbb{N}$, then using equivalence transformations \cite{cfbook}, one can create simpler equivalent forms where either the $b_k=1$ or the $a_k=1$ $\forall k\in \mathbb{N}$, with $a_0=0$ in the latter form. A more concise way to write these two forms is as follows: i) $a_0+\frac{1}{a_1+\frac{1}{a_2+\cdots}}\equiv a_0+ \frac{1}{a_1+}\frac{1}{a_2+\cdots}$ and ii) $\frac{b_1}{1+\frac{b_2}{1+\cdots}} \equiv \frac{b_1}{1+} \frac{b_2}{1+\cdots}$. Form i) is known as \emph{canonical form}. We will interchangeably use the different forms in the paper based on convenience. One of the nice properties of continued fractions is that in representing any real number with natural number parameters $a_k,b_k\in \mathbb{N}$, the rational approximations formed by any of its finite truncations (termed \emph{convergents}) are closer to the true value than any other rational number with the same or smaller denominator. A continued fraction is therefore the ``best'' possible rational approximation in this precise sense \cite{cfbook,cfchap}.

In the case where $a_k$ and/or $b_k$ are linear \emph{functions} of a variable $x\in \mathbb{R}^p$, these can be written as functions expressible by power series expansions around $x=0$, where there is a one-to-one correspondence between the coefficients of both \cite{cfbook,cfchap}. Hence, given parameter vectors $w_k\in \mathbb{R}^p $, we can write the following equality as functions of $x$: 
\begin{equation}
    \label{eq:pows}
    w_0x+ \frac{1}{w_1x+}\frac{1}{w_2x+\cdots} = \sum_{i_1,...,i_p=0}^{\infty}c_{i_1,...,i_p}\prod_{j=1}^p x_j^{i_j}
\end{equation}
for tuples of powers $i_1,...,i_p$ and complex numbers $c_{i_1,...,i_p}$. We will leverage this relationship in Section \ref{sec:archi} as one of the strategies to interpret CoFrNets, since it allows us to express a CF as a power series and hence derive feature attributions for individual features as well as their interactions. 

\begin{figure}
\centering
\includegraphics[width=\textwidth]{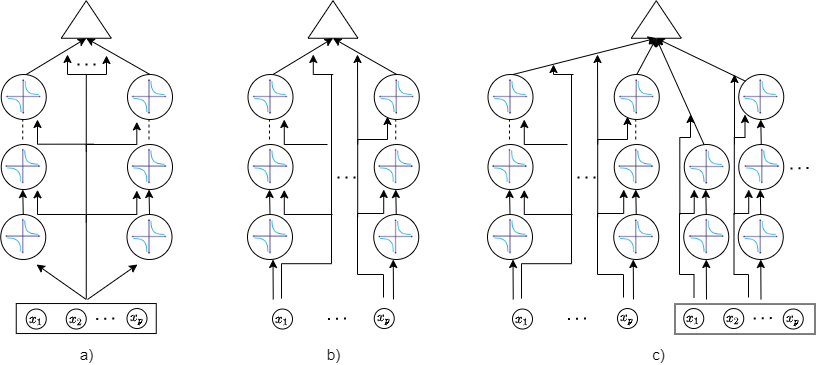}     
  \caption{Three variants of the CoFrNet architecture. In all three variants,  the output (top triangle) is a linear combination of the ladders below it. a) CoFrNet-F is the full-fledged variant where each ladder receives the whole input $x$ at every stage. b) CoFrNet-D is a diagonalized variant where each ladder only receives one of the input dimensions $x_j$ and hence is an additive model. c) CoFrNet-DL is a combination of the diagonalized variant and the full variant. The full ladders are of increasing depth and can be understood to capture the respective order of interactions. 
  }
\label{fig:cofrnetarchi}
\end{figure}

\section{CoFrNet Architecture}
\label{sec:archi}

In this section, we present the proposed architecture based on CFs with three variants. We then discuss aspects of effectively training such architectures and how to obtain feature attributions for interpretation.

\textbf{Proposed Architecture.} We focus on continued fractions in canonical form, with unit numerators $b_k = 1$. As in \eqref{eq:pows}, we let the denominators $a_k = w_k^T x$ be linear functions of the input $x$. Then with $d$ denoting the depth, the basic continued-fractional function that we work with is 
\begin{equation}\label{eq:f}
    f(x; w) = a_0 + \frac{1}{a_1 +} \cdots \frac{1}{a_{d-1} +} \frac{1}{a_d}, \quad a_k = w_k^T x.
\end{equation}
Each such function corresponds to the diagram in Figure~\ref{fig:intro-cofrnet}. We refer to such a function as a ``ladder'' due to this pictorial representation with a rail and rungs that carry the input to each node.

We propose three variants of the architecture, shown in Figure~\ref{fig:cofrnetarchi}, where each variant is a linear combination of functions $f$ in \eqref{eq:f}, i.e., a combination of ladders. We propose CoFrNet-F as a full-fledged variant in which all ladders receive the full input $x$ at each layer. We propose a diagonalized variant CoFrNet-D, in which each ladder operates on a single input dimension $x_j$, i.e., $f_j(x_j; w^{(j)})$. The linear combination of these ladders is therefore an additive model and directly interpretable \cite{hastie1990GAM}. Finally, we propose CoFrNet-DL, which contains both single-feature ladders and full ladders of increasing depth starting at depth two and hence capturing interactions to that order. This variant combines the benefits of the other two architecture variants. 

\noindent\textbf{Training Strategies.} As discussed in the introduction, we regard the proposed architectures as neural architectures in which the input is passed to all layers, linear functions with weights $w_k$ are computed, and the nonlinear activation function is the reciprocal $z \mapsto 1/z$. All variants are differentiable and can thus be trained using standard techniques such as ADMM and backpropagation and popular frameworks such as TensorFlow or PyTorch. Other commonly used ideas such as dropout \cite{dropout} could also be leveraged for better generalization.

The most natural choice is to jointly train all the ladders; however, one could envision other training strategies. For example, one could use boosting where one or a group of ladders are first (jointly) trained
and where we successively train new ladders on the residuals with appropriate example weighting. One could also incrementally fit each layer within a ladder. Another strategy might be to collapse the linear combination of ladders into a single rational function. However, it may be a challenge to tie the coefficients of this rational function to the weights $w_k$; not constraining the rational function coefficients in this way would result in an exponential number of coefficients to estimate (exponential in the depth of the ladders).

\noindent\textbf{Handling Poles.} A key issue that arises from the $\frac{1}{z}$ reciprocal nonlinearity is that the denominator may go to zero during training, leading to the function being undefined at that point (a \emph{pole} in the context of rational functions). To tackle this issue, we slightly alter the activation function to $\mathsf{sgn}(z)\frac{1}{\max\left(|z|,\epsilon\right)}$ for some (small) $\epsilon>0$, where $|\cdot|$ denotes absolute value. The $\epsilon$ can be fixed to a small positive value or tuned during training. Other solutions to this problem involve either restricting each of the denominators to be positive \cite{padenets} or the final denominator of the rational function to be positive \cite{multiratfn}. Both of these constraints can be restrictive as well as computationally challenging. 

\noindent\textbf{Interpretability.} We now discuss two strategies to interpret the full-fledged version of our architecture CoFrNet-F. Both strategies exploit its functional form. 
As mentioned, CoFrNet-D is an additive model and can be interpreted by visualizing the univariate functions $f_j(x_j; w^{(j)})$ that compose it.

\noindent\textbf{i) Interpretation using Continuants (IC):} It is well-known from the theory of continued fractions \cite{cfbook} that $f(x; w)$ in \eqref{eq:f} can be expressed as the following ratio of polynomials,
\begin{equation}\label{eqn:CFcontRatio}
f(x; w) = a_0 + \frac{1}{a_1 +} \cdots \frac{1}{a_{d-1} +} \frac{1}{a_d} = \frac{K_{d+1}(a_0, \dots, a_d)}{K_d(a_1, \dots, a_d)},
\end{equation}
where the polynomials $K_k$, known as \emph{continuants}, satisfy the recursion
\begin{align}
    &K_0 = 1, \qquad K_1(a_d) = a_d,\\
    &K_k(a_{d-k+1}, \dots, a_d) = a_{d-k+1} K_{k-1}(a_{d-k+2}, \dots, a_d) + K_{k-2}(a_{d-k+3}, \dots, a_d).\label{eqn:recurCont}
\end{align}
The following result (proven in the supplement) provides a compact expression for the gradient of $f(x; w)$ with respect to the inputs $x_j$, $j = 1,\dots,p$.
\begin{proposition}\label{prop:dCF/dx}
The partial derivative of $f(x; w)$ with respect to $x_j$ is given by 
\[
\frac{\partial f(x; w)}{\partial x_j} = \sum_{k=0}^d (-1)^k \left(\frac{K_{d-k}(a_{k+1},\dots,a_d)}{K_d(a_1,\dots,a_d)}\right)^2 w_{jk}.
\]
\end{proposition}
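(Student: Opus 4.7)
The plan is to compute $\partial f/\partial x_j$ via the chain rule applied to $a_k = w_k^T x$, so that
\[
\frac{\partial f}{\partial x_j} = \sum_{k=0}^d \frac{\partial f}{\partial a_k}\, w_{jk},
\]
and then show $\partial f/\partial a_k = (-1)^k \bigl(K_{d-k}(a_{k+1},\dots,a_d)/K_d(a_1,\dots,a_d)\bigr)^2$. This reduces the problem to differentiating the tail structure of the continued fraction.

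To compute $\partial f/\partial a_k$, I would introduce the ``tails'' $T_k = a_k + 1/(a_{k+1} + \cdots + 1/a_d)$, so that $f = T_0$ and the $T_k$ satisfy the recursion $T_k = a_k + 1/T_{k+1}$, with $T_d = a_d$. Differentiating this recursion gives $\partial T_{k-1}/\partial a_k = -T_k^{-2}\,\partial T_k/\partial a_k$ for each variable $a_k$ sitting inside the tail, while $\partial T_k/\partial a_k = 1$ since $a_k$ appears additively at the top of $T_k$. Iterating this chain of derivatives from $T_0$ down to $T_k$ yields
\[
\frac{\partial f}{\partial a_k} = (-1)^k \prod_{i=1}^k \frac{1}{T_i^2}.
\]
The case $k=0$ (where $\partial f/\partial a_0 = 1$) matches with the empty product.

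The final ingredient is to rewrite $\prod_{i=1}^k T_i$ in terms of continuants. By applying \eqref{eqn:CFcontRatio} to the sub-continued-fraction starting at $a_i$, one obtains $T_i = K_{d-i+1}(a_i,\dots,a_d)/K_{d-i}(a_{i+1},\dots,a_d)$. The denominator at index $i$ equals the numerator at index $i+1$, so the product telescopes to
\[
\prod_{i=1}^k T_i = \frac{K_d(a_1,\dots,a_d)}{K_{d-k}(a_{k+1},\dots,a_d)}.
\]
Substituting this into the expression for $\partial f/\partial a_k$ and then into the chain-rule sum gives exactly the claimed formula.

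I do not anticipate a real obstacle; the only delicate point is bookkeeping of indices. Specifically, one must be careful that the tail $T_i$ involves $d-i+1$ variables $a_i,\dots,a_d$ and therefore corresponds to the continuant $K_{d-i+1}$, so that the indices of the continuants in the telescoping product align correctly. Verifying the edge cases $k=0$ (empty product, giving $1$) and $k=d$ (giving $K_0(\cdot)^2/K_d(a_1,\dots,a_d)^2 = 1/K_d^2$) is a useful sanity check that the formula reproduces the expected derivatives at the endpoints.
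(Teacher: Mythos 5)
Your proof is correct, and it reaches the key identity $\partial f/\partial a_k = (-1)^k \bigl(K_{d-k}(a_{k+1},\dots,a_d)/K_d(a_1,\dots,a_d)\bigr)^2$ by a genuinely different route than the paper. The paper works entirely with continuants as polynomials: it represents $K_k$ as the determinant of a tridiagonal matrix, differentiates via the cofactor formula to get $\partial K_k/\partial a_l = K_{l-1}(a_1,\dots,a_{l-1})K_{k-l}(a_{l+1},\dots,a_k)$, applies the quotient rule to $K_{d+1}/K_d$, and then needs a separate continuant identity, $K_k(a_0,\dots,a_{k-1})K_d(a_1,\dots,a_d) - K_{k-1}(a_1,\dots,a_{k-1})K_{d+1}(a_0,\dots,a_d) = (-1)^k K_{d-k}(a_{k+1},\dots,a_d)$, proved by induction. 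You instead differentiate the tail recursion $T_i = a_i + 1/T_{i+1}$ directly, getting $\partial f/\partial a_k = (-1)^k\prod_{i=1}^k T_i^{-2}$, and then convert each tail to a ratio of continuants via \eqref{eqn:CFcontRatio} so that the product telescopes. Your argument is shorter and more elementary — it needs neither the determinant representation nor the induction — and the telescoping makes the alternating sign and the squared ratio transparent. What the paper's route buys is that it is a purely polynomial computation: it never divides by intermediate tails, so it holds verbatim wherever $K_d \neq 0$, and the cofactor formula and the continuant identity it establishes are reusable facts. Your derivation implicitly assumes all intermediate tails $T_1,\dots,T_k$ (equivalently the continuants $K_{d-i}(a_{i+1},\dots,a_d)$) are nonzero; since both sides of the resulting identity are rational functions of $a_0,\dots,a_d$ that agree on the dense open set where these are nonzero, the identity extends to all points where both sides are defined, but it is worth stating this one-line remark to make the argument airtight. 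Your edge-case checks at $k=0$ and $k=d$ are consistent with the paper's.
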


Proposition~\ref{prop:dCF/dx} provides a computationally efficient means to compute the gradient of a ladder with respect to its inputs, which is useful for multiple feature-based methods of interpretation \cite{molnarbook}. Given an input $x$, and assuming that the linear functions $a_k = w_k^T x$ have already been computed in evaluating $f(x; w)$, the continuants $K_k$ can be computed using \eqref{eqn:recurCont} in $O(d)$ operations. Then for each input $x_j$, the sum in Proposition~\ref{prop:dCF/dx} also requires $O(d)$ operations, for a total of $O(dp)$ operations for all $x_j$. Proposition~\ref{prop:dCF/dx} additionally suggests an interpretation of the coefficients $w_{jk}$ as contributions to the partial derivative for $x_j$, weighted by ratios of continuants and with alternating signs. 

For linear combinations of ladders $f$ as in Figure \ref{fig:cofrnetarchi}, the above result extends straightforwardly since differentiation is a linear operation. This yields feature attributions in $O(Ldp)$ time for $L$ ladders.

\noindent\textbf{ii) Interpretation using Power Series (IPS):} The above method using continuants gives first-order attributions at a per example level. To obtain higher-order as well as first-order global attributions, we turn to the representation of a ladder in \eqref{eq:pows} as a multivariate power series, where as mentioned before there is a one-to-one mapping between the coefficients of the two forms. A linear combination of ladders, which our architecture entails, can also be represented by a multivariate power series by summing the coefficients $c_{i_1,\dots,i_p}$ for each monomial term. These coefficient sums thus provide attributions for individual features $x_j$ as well as higher-order interactions, up to the depth of the ladders.

For low-depth ladders, it is possible to manually equate and find the appropriate coefficients based on a linear recurrence relation \cite{linrecbook}. However in general, manual computation can be too laborious. For such cases we recommend using symbolic manipulation tools such as Mathematica \cite{mathematica}. For a function $g$ that is a linear combination of ladders $f$ of depth $d$, one can obtain the power series expansion up to order $dp$ by applying the following set of Mathematica operations:
\begin{equation}
    \label{eq:mathematica}
    \text{N}\left[\text{Normal}\left[\text{Series}\left[g, \{x_1,0,d\},\cdots,\{x_p,0,d\}\right]\right]\right]
\end{equation}
where ``Series'' produces a Taylor series expansion, ``Normal''
implies normalized form, and ``N'' represents fractional coefficients as decimals. The appropriate coefficients can then be picked off to determine feature attributions or attributions for interactions.

\section{Universal Approximation}
\label{sec:uniapprx}
We now prove (Theorem \ref{thm:main}) that a linear combination of continued fractions has the property of universal approximation. More precisely, we show this for the family of functions that are linear combinations of a finite number of continued fractions, each with finite depth and linear layers. Our strategy essentially comprises three steps: i) showing polynomials of linear functionals (${\cal PL}$) are a unital subalgebra \cite{subalgebra} and separating on the domain, ii) applying the Stone-Weierstrass theorem \cite{stoneweir} to show that they are thus dense in the space of bounded continuous functions, and iii) showing that ${\cal PL}$ are a subset of the aforementioned class of functions, i.e.~finite number of finite-depth continued fractions where each layer is a linear function of the input. This implies the latter class is also dense in the space of bounded continuous functions, a.k.a.~universal approximators. 

Without loss of generality we consider the domain $\chi = \left[0,1\right]^p$ along with the usual Euclidean metric $d(x,y)= \lVert x -y \rVert_2, ~ x,y \in \chi$. Since $\chi$ is bounded and closed, it is a compact metric space. The space of bounded continuous functions $C(\chi,\mathbb{R})=\{f:\chi \mapsto \mathbb{R}: f\mathrm{~is~continuous}, ~\exists M \mathrm{~s.t.~} \lvert f(x) \rvert \leq M ~ \forall x \in \chi\}$. Let $\lVert f(x) \rVert_\infty = \max_{x \in \chi} \lvert f(x) \rvert $. Also for the proof we explicitly mention the constant term for each linear function which was subsumed in $x$ in previous sections.

\begin{definition}
(\textit{Identity Function}) Define $\mathrm{Id}(x)$ to be the identity function where $\mathrm{Id}(x)=1 ~\forall x \in \chi$.
\end{definition}

\begin{definition}
${\cal P} \subset C(\chi,\mathbb{R})$ is a \textit{unital subalgebra} if a) $\mathrm{Id}(x) \in {\cal P}$, b) $\forall f,g \in {\cal P}, f*g \in {\cal P}$, c) $\forall f,g \in {\cal P}$ and $\alpha, \beta \in \mathbb{R}$, $\alpha f + \beta g \in {\cal P}$. 
\end{definition}

\begin{definition}
${\cal P} \subset C(\chi,\mathbb{R})$ is \textit{separating} on the domain $\chi$ if $\forall x,y \in \chi, ~ x \neq y, \exists f \in {\cal P}: f(x) \neq f(y) $. 
\end{definition}

\begin{theorem}\label{thm:SW}
(Stone-Weierstrass Theorem \cite{stoneweir}) If $\chi$ is a compact metric space, and if ${\cal P} \subset C(\chi,\mathbb{R})$ is a unital subalgebra and separating on the domain $\chi$, then ${\cal P}$ is dense in $C(\chi,\mathbb{R})$ with respect to the $\ell_{\infty}$ metric.
\end{theorem}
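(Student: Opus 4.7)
The plan is to prove Stone--Weierstrass by the classical lattice argument. First I would pass to the closure $\bar{\cal P}$ of ${\cal P}$ in $C(\chi,\mathbb{R})$ under $\|\cdot\|_\infty$ and verify it is again a unital subalgebra: uniform limits preserve continuity, addition, and scalar multiplication; for products one uses that every $f \in \bar{\cal P}$ is bounded on the compact set $\chi$, so $f_n \to f$ and $g_n \to g$ uniformly implies $f_n g_n \to fg$ uniformly. It suffices to prove $\bar{\cal P} = C(\chi,\mathbb{R})$, and $\bar{\cal P}$ inherits the separating property from ${\cal P}$.

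The key analytic step is that $|f| \in \bar{\cal P}$ whenever $f \in \bar{\cal P}$. I would obtain this by uniformly approximating $t \mapsto \sqrt{t}$ on $[0,1]$ by ordinary polynomials via the elementary recursion $p_0 = 0$, $p_{n+1}(t) = p_n(t) + \tfrac{1}{2}(t - p_n(t)^2)$; an induction shows $0 \le p_n(t) \le \sqrt{t}$ and the sequence is monotone increasing, so Dini's theorem gives uniform convergence. Applying a suitably scaled $p_n$ to $f^2/\|f\|_\infty^2$ and using that $\bar{\cal P}$, being a unital algebra, contains every polynomial expression in $f$, yields $|f|\in\bar{\cal P}$. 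The identities $\max(f,g) = \tfrac{1}{2}(f+g) + \tfrac{1}{2}|f-g|$ and $\min(f,g) = \tfrac{1}{2}(f+g) - \tfrac{1}{2}|f-g|$ then make $\bar{\cal P}$ a sublattice of $C(\chi,\mathbb{R})$.

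The third ingredient is two-point interpolation: for every $x \neq y$ in $\chi$ and every $\alpha,\beta \in \mathbb{R}$, there exists $h \in {\cal P}$ with $h(x) = \alpha$ and $h(y) = \beta$. By the separating property, pick $\phi \in {\cal P}$ with $\phi(x) \neq \phi(y)$; since ${\cal P}$ is unital and closed under real linear combinations, $h = s\,\mathrm{Id} + t\phi$ for the unique solution $(s,t)$ of the resulting $2 \times 2$ linear system does the job.

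Finally I would run the standard sandwich argument. Fix $g \in C(\chi,\mathbb{R})$ and $\varepsilon > 0$. For each ordered pair $(x,y)$ choose $h_{x,y} \in {\cal P}$ matching $g$ at both points (taking $h_{x,x} = g(x)\,\mathrm{Id}$). For each fixed $x$, the sets $U_y = \{z : h_{x,y}(z) < g(z) + \varepsilon\}$ are open and contain $y$; compactness of $\chi$ extracts a finite subcover, and $f_x := \min_{i} h_{x,y_i}$, which lies in $\bar{\cal P}$ by the lattice property, satisfies $f_x < g + \varepsilon$ pointwise while $f_x(x) = g(x)$. The sets $V_x = \{z : f_x(z) > g(z) - \varepsilon\}$ are then open and cover $\chi$; a second finite subcover and $f := \max_j f_{x_j} \in \bar{\cal P}$ yields $\|f - g\|_\infty \le \varepsilon$. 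Since $\bar{\cal P}$ is closed and $\varepsilon$ arbitrary, $g \in \bar{\cal P}$. The main obstacle is the polynomial approximation of $|t|$ (equivalently $\sqrt{t}$); once the lattice structure is in hand, everything else is bookkeeping plus two applications of compactness.
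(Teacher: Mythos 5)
Your proposal is a correct and complete rendition of the classical lattice-theoretic proof of Stone--Weierstrass: passing to the closure $\bar{\cal P}$, obtaining $\lvert f\rvert \in \bar{\cal P}$ via the monotone polynomial recursion $p_{n+1}(t)=p_n(t)+\tfrac12\bigl(t-p_n(t)^2\bigr)$ together with Dini's theorem, deducing the lattice property from the $\max/\min$ identities, two-point interpolation from unitality plus separation, and the double-compactness sandwich argument. The only thing to note is that the paper itself does not prove this statement at all---it is quoted as a known theorem with a citation and then used as a black box in Lemma~\ref{lem:PLdense} and Theorem~\ref{thm:main}---so there is no in-paper proof to compare against; your argument is the standard textbook proof one would find in the cited sources, and it is sound. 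Two small points worth making explicit if you write it up: handle the trivial case $\lVert f\rVert_\infty=0$ before dividing by $\lVert f\rVert_\infty^2$ in the $\lvert f\rvert$ step, and note that compactness of $\chi$ is what guarantees both the boundedness used there and the existence of the two finite subcovers; both are immediate, so the proof stands as written.
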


\begin{definition}
(Polynomials of Linear Functions) Define
\begin{align}
{\cal PL}=\left\{ c_0 + \sum_{S\in {\cal S}} c_S \prod \limits_{k \in S} (u_k^Tx): c_0,c_S \in \mathbb{R}, ~ u_k \in \mathbb{R}^{p} ~ \forall k \in [m], ~{\cal S} \subset 2^{[m]}, \; m \in \mathbb{N}  \right\}
\end{align}
to be the set of polynomials on linear functions $u_k^T x$ of $x$.
\end{definition}
In the above definition, note that we may have $u_l = u_k$ for $l \neq k$ to obtain higher powers of $u_k^T x$.

\begin{lemma}\label{lem:PLdense}
 ${\cal PL}$ is a unital subalgebra and is separating on $\chi$.  ${\cal PL}$ is dense in $C(\chi, \mathbb{R})$.
\end{lemma}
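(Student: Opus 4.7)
The plan is to verify the three algebraic closure properties plus the separation property and then simply invoke Theorem~\ref{thm:SW}. None of the steps is deep; the work is in bookkeeping with the indexing used in the definition of $\mathcal{PL}$.

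First I would check the three subalgebra conditions. For the unit, note that taking $c_0 = 1$ and $\mathcal{S} = \emptyset$ in the defining expression gives $\mathrm{Id}(x) = 1$, so $\mathrm{Id} \in \mathcal{PL}$. For closure under linear combinations, if $f = c_0 + \sum_{S \in \mathcal{S}} c_S \prod_{k \in S}(u_k^T x)$ and $g = c_0' + \sum_{T \in \mathcal{T}} c_T' \prod_{k \in T}(v_k^T x)$, then $\alpha f + \beta g$ is again of the required form: reindex the linear functionals $u_k$ and $v_k$ jointly into one list $w_1,\dots,w_{m+m'}$, collect the constant $\alpha c_0 + \beta c_0'$, and take the union of the (reindexed) monomial index sets with coefficients $\alpha c_S$ and $\beta c_T'$ respectively. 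For closure under multiplication, distribute $fg$; each cross term is a product of two products of linear functionals, hence a single product $\prod_{k \in S \cup T'} (w_k^T x)$ of linear functionals after reindexing. The constant $c_0 c_0'$ absorbs into the constant term, and the terms $c_0 \cdot c_T' \prod(v_k^T x)$ and $c_0' \cdot c_S \prod(u_k^T x)$ are already of the required form.

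Next I would handle separation. Given $x \neq y$ in $\chi$, there exists a coordinate $j$ with $x_j \neq y_j$. Taking $u_1 = e_j$ (the $j$-th standard basis vector) and $\mathcal{S} = \{\{1\}\}$ with $c_{\{1\}} = 1$ and $c_0 = 0$ gives $f(z) = z_j \in \mathcal{PL}$, and $f(x) = x_j \neq y_j = f(y)$.

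Finally, since $\chi = [0,1]^p$ is a compact metric space and $\mathcal{PL} \subset C(\chi, \mathbb{R})$ (each polynomial in linear functionals is continuous and bounded on the compact set $\chi$) has been shown to be a unital subalgebra that separates points, Theorem~\ref{thm:SW} directly gives density of $\mathcal{PL}$ in $C(\chi, \mathbb{R})$ with respect to $\lVert \cdot \rVert_\infty$.

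The main "obstacle," if there is one, is purely notational: the multiplication step requires being careful that the union $S \cup T'$ of index sets (after reindexing $T$'s linear functionals into new slots disjoint from those of $S$) correctly produces the product of monomials, and that repeated linear functionals (permitted by the remark after the definition of $\mathcal{PL}$) handle higher powers without issue. Everything else is immediate.
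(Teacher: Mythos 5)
Your proposal is correct and follows essentially the same route as the paper: verify the unital subalgebra properties, exhibit a separating linear functional, and invoke the Stone--Weierstrass theorem (Theorem~\ref{thm:SW}). The only cosmetic difference is that you separate points with the coordinate functional $z \mapsto z_j$ for a coordinate where $x_j \neq y_j$, whereas the paper picks a generic $u$ outside the null space of $x-y$ via rank--nullity; both are valid instances of the same idea, and your more explicit bookkeeping for closure under products and linear combinations simply fills in what the paper dismisses as easy to see.
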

\begin{proof}
Let $f(x), g(x) \in {\cal PL}$. It is easy to see that $f(x) * g(x) \in {\cal PL}$ and $\alpha f(x) + \beta g(x) \in {\cal PL}$ for all $\alpha, \beta \in \mathbb{R}$. Further, setting $c_0=1$ and $c_S =0$ in the definition of ${\cal PL}$ yields the identity function. Therefore, ${\cal PL}$ is a unital subalgebra.

For any two $x \neq y,~x,y \in \chi$, let $u \in \mathbb{R}^p, ~ u \neq 0$ be such that it does not belong to the null space of $x-y$. Such a $u$ can always be found by the rank-nullity theorem applied to the subspace $\mathrm{span}\{x-y\}$ in the vector space $\mathbb{R}^p$. 
Now, consider $f(s) = c_0 + u^T s  \in {\cal PL}$. Then,  $f(x) \neq f(y)$ since $u^T (x-y) \neq 0$. This shows that ${\cal PL}$ is separating in the domain $\chi$. Hence, by Theorem \ref{thm:SW}, ${\cal PL}$ is dense in $C(\chi, \mathbb{R})$.
\end{proof}

\begin{definition}
(Continued Fractions with Linear Functions) Let 
\begin{align}
\mathcal{CFL} = \left\{ \frac{v_0^T x + \alpha_0}{1+} \frac{v_1^T x + \alpha_1}{1+w_{1}^T x + \beta_1 +} \frac{v_{2}^T x + \alpha_2}{1+w_{2}^T x + \beta_2 +} \ldots \frac{v_{d}^T x + \alpha_d}{1+w_{d}^T x + \beta_d} : \right.\nonumber\\ 
\left. v_0, v_k, w_k \in \mathbb{R}^p, \; \alpha_0, \alpha_k, \beta_k \in \mathbb{R}, \; 1 \leq k \leq d, \; d \in \mathbb{N} \right\}
\end{align}
be the set of finite-depth continued fractions with affine functions $v_k^T x + \alpha_k$ and $w_k^T x + \beta_k$ as numerators and denominators.
\end{definition}
In the above definition, we are explicitly writing the constant terms $\alpha_k$, $\beta_k$ for clarity.

Given a set $A$ of functions on $\chi$, let $A \bigoplus A = \{ \alpha a + \beta b: a,b \in A, \alpha, \beta \in \mathbb{R} \}$ be the set of linear combinations of two functions from $A$, and $\bigoplus^L A$ the set of linear combinations of $L$ functions from $A$.

\begin{theorem}
\label{thm:main}
(Representation Theorem)  ${\cal PL} \subset \bigcup\limits_{L=1}^{\infty} \bigoplus^{L} {\cal CFL} $. Also, $ \bigcup\limits_{L=1}^{\infty} \bigoplus^{L} {\cal CFL}$ is dense in $C(\chi, \mathbb{R})$.
\end{theorem}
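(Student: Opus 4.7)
The plan is to peel off the second claim from the first, then do the main work on the inclusion. Since $\mathcal{PL}$ is already dense in $C(\chi,\mathbb{R})$ by Lemma \ref{lem:PLdense}, density of the superset $\bigcup_{L=1}^{\infty}\bigoplus^{L}\mathcal{CFL}$ follows immediately once I establish $\mathcal{PL}\subset \bigcup_{L}\bigoplus^{L}\mathcal{CFL}$. Every element of $\mathcal{PL}$ is a finite sum of a constant and monomials $\prod_{k\in S}(u_k^T x)$, and $\bigcup_{L}\bigoplus^{L}\mathcal{CFL}$ is closed under finite linear combinations, so it suffices to realize each individual monomial (and each constant) as a finite linear combination of CFLs.

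The core step is a collapse identity: if I impose $v_k=-w_k$ and $\alpha_k=-\beta_k$ for every $k\ge 1$, then the depth-$d$ continued fraction in $\mathcal{CFL}$ collapses to the polynomial
\[
T_d(x)\;=\;A_0\bigl(1+B_1+B_1 B_2+\cdots+B_1 B_2\cdots B_d\bigr),
\]
where $A_0=v_0^T x+\alpha_0$ and $B_k=w_k^T x+\beta_k$. I would prove this by induction on $d$, tracking the running denominators $D_k=1+B_k+A_{k+1}/D_{k+1}$ (with base case $D_d=1+B_d$). Under the substitution $A_{k+1}=-B_{k+1}$, $D_k$ takes the clean ratio form $(1+B_k+\cdots+B_k\cdots B_d)/(1+B_{k+1}+\cdots+B_{k+1}\cdots B_d)$; then $1+A_1/D_1$ collapses to $1/(1+B_2+\cdots+B_2\cdots B_d)$, so that the outermost division $A_0/(1+A_1/D_1)$ yields $T_d$ exactly.

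Given the collapse identity, any product of affine functions $L_0 L_1\cdots L_m$ is isolated by a single subtraction: choose $A_0=L_0$ and $B_j=L_j$ for $j=1,\dots,m$; then the depth-$m$ CFL minus the depth-$(m-1)$ CFL built with the same $A_0,B_1,\ldots,B_{m-1}$ equals $A_0 B_1\cdots B_m$. Constants and individual linear functions arise as degenerate depth-$1$ CFLs obtained by zeroing $v_1$ and $\alpha_1$ so that the inner fraction vanishes. Applying these constructions term by term across an element of $\mathcal{PL}$ produces the required finite linear combination of CFLs.

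The principal obstacle is the collapse identity: the cancellations are perfect but call for careful bookkeeping of numerators and denominators at each rung of the ladder. A secondary subtlety is that some intermediate denominator $D_k$ could vanish somewhere on $\chi$, making a CFL ill-defined there; this can be handled by choosing the $\beta_k$ large enough that every $1+B_k$ stays bounded away from zero on $\chi=[0,1]^p$, and then undoing the shifts by an inclusion-exclusion expansion in order to recover a product of the original, unshifted linear functions $u_k^T x$.
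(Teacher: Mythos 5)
Your proposal is correct and follows essentially the same route as the paper: your ``collapse identity'' is precisely Euler's continued-fraction formula (which the paper invokes directly), your single-subtraction step isolating a monomial is exactly the paper's equation \eqref{eq:EulerDiff}, and the rest — realizing constants and linear terms as degenerate ladders, summing term by term, and appealing to Lemma \ref{lem:PLdense} for density — mirrors the paper's argument. The only differences are cosmetic or additive: you prove the identity by induction rather than citing Euler, and you go beyond the paper in explicitly shifting the affine factors to keep intermediate denominators nonzero on $\chi$ (the paper only acknowledges, in its remark, that the class contains functions with singularities and leaves this point implicit).
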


\begin{proof}
By Euler's formula for continued fractions we have:
\begin{align}
    \frac{a_0}{1+} \frac{-a_1}{1+a_{1}+} \frac{-a_2}{1+a_{2}+} \ldots \frac{-a_d}{1+a_{d}} = a_0 + a_0 a_1 + \ldots a_0 a_1 \ldots a_d.
\end{align}
By applying the above formula twice to two nested sums, we have:
 \begin{align} 
      a_0 a_1 \ldots a_d = \left[ \frac{a_0}{1+} \frac{-a_1}{1+a_{1}+} \frac{-a_2}{1+a_{2}+} \ldots \frac{-a_d}{1+a_{d}} \right] - \left[ \frac{a_0}{1+} \frac{-a_1}{1+a_{1}+} \frac{-a_2}{1+a_{2}+} \ldots \frac{-a_{d-1}}{1+a_{d-1}} \right]. \label{eq:EulerDiff}
 \end{align}
Now to represent a monomial $c \prod_{k \in [d]} (u_k^T x),~ c \in \mathbb{R}$, we observe that we need to set $a_0=c,~a_k= (u_k^T x) ~\forall k \in \{1,...,d\}$ in (\ref{eq:EulerDiff}). This in turn can be realized as a member of $\mathcal{CFL}$ by setting $v_0 = 0$, $\alpha_0 = c$, $w_k = -v_k = u_k$, and $\alpha_k = \beta_k = 0$ for $k = 1,\dots,d$. Hence we have $c \prod_{k \in [d]} (u_k^T x) \in {\cal CFL} \bigoplus{\cal CFL} $.

Now consider $f(x)= c_0 + \sum_{S\in {\cal S}} c_S \prod \limits_{k \in S} (u_k^Tx) \in {\cal PL}$. Then we have
 $f \in \bigoplus^{2|{\cal S}|+ 1}{\cal CFL} $
by doing a term by term expansion in terms of ${\cal CFL} \bigoplus{\cal CFL}$. This implies that ${\cal PL} \subset \bigcup \limits_{L=1}^{\infty} \bigoplus^{L}{\cal CFL} $. 
Combining with Lemma \ref{lem:PLdense} implies that $\bigcup \limits_{L=1}^{\infty} \bigoplus^{L}{\cal CFL} $ is dense in $C(\chi, \mathbb{R})$.
\end{proof}
\textbf{Remark.} Note that $\bigcup \limits_{L=1}^{\infty} \bigoplus^{L}{\cal CFL}$ does contain functions with singularities in $\mathbb{R}^p$. This implies that it contains functions that are not in $C(\chi, \mathbb{R})$. But nevertheless, it is dense in $C(\chi, \mathbb{R})$. The presence of singularities is the reason why it is not possible to directly apply proof techniques using the Hahn-Banach theorem \cite{cybenko1989approximation}, which are used for proving representation theorems for two-layer neural networks. It is noteworthy that one only needs linear functions in every layer of a continued fraction and linear combinations of these to represent any bounded function on a compact set.

\textbf{Compactness of Representation for Learning Sparse Polynomials.} If one wants to learn a sparse polynomial in the variables $x_j$ where the number of non-zero monomials $|{\cal S}|$ is a constant and degree bounded, i.e.~$|S| \leq d, ~ \forall S \in {\cal S}$, Lasso-based techniques would require a representation which is $p^d$ in size (although sample complexity may be polynomial in $d \log p$) \cite{negahban2012learning}. However, our representation would require only $2 |{\cal S}|+1$ parameterized ladders each of depth at most $d$. Efficient learning of sparse polynomials using such compact representations is an interesting direction for future work.

\section{Experiments}
\label{sec:exp}
We now conduct synthetic and real data experiments. The goal of the synthetic experiments is two fold: i) to show that we can accurately model well known non-linear functions \cite{optfns} (viz. Matyas function, Rosenbrock function, etc.), but more importantly ii) to show that our architecture lends itself to global interpretation using the two strategies described in Section \ref{sec:archi}. Performance comparisons in terms of mean absolute percentage error (MAPE) with MLPs of the same depth and with similar number of parameters are in the supplement, as our main intent here is to showcase interpretability.

We then experiment on seven real public datasets covering tabular, text and image data. For six of them in addition to MLP we compare against four well known interpretable models namely, GAMs (\url{github.com/dswah/pyGAM}), NAMs (\url{github.com/nickfrosst/neural_additive_models}), EBMs (\url{github.com/interpretml/interpret}), and CART decision trees as well as the recent LassoNet (\url{github.com/lasso-net/lassonet}) with the main intent being to compare test accuracies. For the text data we used Glove embeddings \cite{glove}. We report feature attributions for some of the datasets in the main paper. 

We average results over five random train/validation/test splits (65\%/5\%/30\%) for all datasets except those that come with their own pre-specified test set. A two P100 GPU system with 120 GB RAM was used to run the experiments. In the activation function, $\epsilon$ was set to $0.1$ to mitigate poles.

\subsection{Synthetic Experiments}
In these experiments we use the CoFrNet-F variant to model different synthetic functions, based on a sample of 300 points for each function. We compute feature attributions using the continuants strategy, and the entire function using the power series strategy mentioned in Section \ref{sec:archi}. A single full ladder is used in each case and the depth is equal to the degree of the function we are approximating. For non-polynomial functions we set the depth to be six. We choose CoFrNet-F since it is the hardest variant to interpret; we show that it can be interpreted.  
\begin{wrapfigure}{r}{0.6\textwidth} 
\centering
    \begin{subfigure}[b]{0.28\textwidth}
        \vspace{0pt}
        \centering
        \includegraphics[width=\textwidth]{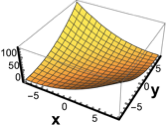}\\
        \includegraphics[width=\textwidth]{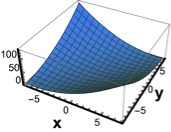}
        \caption{\textbf{OF:} $\textcolor{blue}{0.54x^2}+\textcolor{green}{0.54y^2}\textcolor{orange}{-xy}$, \textbf{IPS:} $\textcolor{blue}{0.56x^2} + \textcolor{green}{0.44y^2} \textcolor{orange}{-xy}+0.06 x - 0.07y -0.01 $, $~~~~~~~~$ \textbf{IC:} $0.06\rightarrow x$, $-0.07\rightarrow y$.}
        \label{fig:matya}
    \end{subfigure}
    \hspace{1mm}
    \begin{subfigure}[b]{0.3\textwidth}
        \vspace{0pt}
        \centering
        \includegraphics[width=\textwidth]{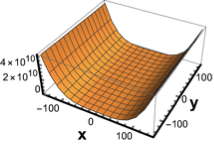}\\
        \includegraphics[width=\textwidth]{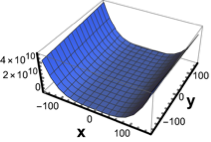}
        \caption{\textbf{OF:} $\textcolor{blue}{0.005}\textcolor{red}{-0.01x}+\textcolor{green}{0.005x^2}+0.5y^2-x^2y+0.5x^4$, \textbf{IPS:} $\textcolor{blue}{0.002}\textcolor{red}{-0.01x}+\textcolor{green}{0.008x^2}-0.01x^2y+x^4-0.09x^3(3.1-y)$, \textbf{IC:} $-0.01\rightarrow x$, $0\rightarrow y$.}
        \label{fig:other}
    \end{subfigure}
\caption{Original function (OF; in yellow) and the corresponding IPS approximation (in blue) for the Matyas function (left) and Rosenbrock function (right). The subfigure caption lists the OF, IPS, and feature attributions from IC. The equations for OF and IPS are normalized by maximum coefficient. Coefficients of the same order terms (that are close) are color-coded for ease of comparison. CoFrNet-F is able to approximate the shape of the functions well. The (univariate) feature attributions for IPS and IC are consistent.}
\label{fig:syn}
\vspace{-1.2cm}
\end{wrapfigure}

Figure \ref{fig:syn} shows two well-known nonlinear functions: the Matyas function and the Rosenbrock function. CoFrNet-F is able to accurately approximate both (7.31\% and 13.08\% MAPE respectively). Importantly, the IPS interpretation leveraging the one-to-one correspondence to power series is able to replicate the functions quite closely. The constructed interpretation is not only close in prediction, but also in (univariate) feature attribution. For Matyas, we even recover higher order coefficients accurately (possibly due to a better fit). The linear and constant terms have very small coefficients in the Matyas function approximation, making the IPS and original function very similar. Moreover, the feature attributions for the linear terms are also recovered by IC, the closed-form formula involving continuants.

\subsection{Real Data Experiments}
We evaluate our approach relative to other approaches on Credit Card, Magic and Waveform tabular datasets \cite{uci}, the sentiment analysis \cite{datasentiment} and Quora Insincere Questions  \cite{dataquora} text datasets, and the CIFAR10 \cite{cifar} image dataset. We also experimented with our approach on the ImageNet dataset \cite{imagenet}. The dataset characteristics are in the supplement. We report performance of the CoFrNet-DL architecture, which was the best performing among the three variants and also CoFrNet-D to show how well our simplest form performs. For CoFrNet-D, the depth of the univariate ladders is varied up to 250. For CoFrNet-DL, besides the $p$ univariate ladders, we consider up to 50 full ladders with increasing depth (maximum depth of 50) as per the architecture in Figure \ref{fig:cofrnetarchi}. We used early stopping, dropout, batching and Adam optimizer with weight decay. For CIFAR10 we also did data augmentation (i.e. random cropping and flipping). The best performance for most datasets using CoFrNet-DL was obtained using ladders of depth 12 or less.

\begin{table}
\small
\caption{Test accuracies of different methods on six real datasets. For datasets without pre-specified test sets, a paired t-test was conducted to determine statistical significance. DNC denotes `did not converge.' Best results (within statistical error and excluding SOTA) are in bold.}
\begin{tabular}{|c|c|c|c|c|c|c|c|}
  \hline
    \textbf{Methods} & \textbf{Interpretable}& \textbf{Waveform} & \textbf{Magic} & \textbf{Credit Card} & \textbf{CIFAR10} & \textbf{Sentiment} & \textbf{Quora}\\
 \hline\hline
 CoFrNet-DL & Yes & \textbf{0.87} & \textbf{0.86} & \textbf{0.71} & \textbf{0.87} & \textbf{0.84} & \textbf{0.88} \\
 \hline
 CoFrNet-D & Yes & 0.69 & 0.76 & 0.66 & 0.38 & 0.80 & 0.75 \\
 \hline
 GAM & Yes & \textbf{0.85} & \textbf{0.85} & \textbf{0.72} & DNC & 0.51 & DNC\\
     \hline
 NAM & Yes & \textbf{0.86} & 0.81 & 0.69 & 0.38 & 0.50 & 0.49\\
     \hline
 EBM & Yes & \textbf{0.85} & \textbf{0.85} & \textbf{0.72} & 0.40 & 0.59 & 0.49\\
 \hline
 CART & Yes & 0.75 & 0.79 & 0.69 & 0.29 & 0.52 & 0.73\\ 
   \hline
  LassoNet & Yes & 0.84 & 0.76 & 0.67 & 0.28 & 0.50 & 0.53\\ 
   \hline
 MLP & No & 0.34 & 0.65 & 0.50 & 0.35 & \textbf{0.83} & 0.85 \\
 \hline
 \hline
 SOTA & No & 0.86 & 0.86 & 0.75 & 0.99 & 0.96 & 0.94\\
 \hline
  \end{tabular}
 \label{tab:realexp}
\end{table} 

We observe in Table \ref{tab:realexp}, that the CoFrNet-DL model is competitive with other interpretable models on the tabular datasets (i.e. within few percent) and within 6\% of the state-of-the-art (SOTA) black-box models for these datasets (gradient boosted trees) \cite{gbfl}. On images and text, although we are farther off from SOTA black-box models (\cite{effnet} for CIFAR10 and \cite{xlnet} for text), we are significantly better than other interpretable models, and similar or better than (uninterpretable) MLP. We believe this is because CoFrNets can compactly represent a rich class of functions in high-dimensional space where properties of CFs such as fast convergence are likely witnessed. We additionally also trained the CoFrNet-DL model on ImageNet and obtained an accuracy of 0.69, which is comparable to ResNet-18 type architectures.

We showcase the interpretability of our
CoFrNet-DL model in Figure \ref{fig:expl}. Figures \ref{fig:wave}, \ref{fig:magic} and \ref{fig:credit} depict the (functional) behavior of the most important diagonalized ladders in our CoFrNet-DL models trained on the Waveform, Magic and Credit Card datasets respectively. Similar plots for the top three features in each dataset are provided in the supplement. The important features also seem to make semantic sense given the respective tasks. For example, in the Credit Card dataset ``Bill statement in April, 2005'' should have an impact on predicting payment defaults, since someone not having repaid their previous balance would presumably have a higher likelihood of defaulting. Figure \ref{fig:cifar}, shows the feature attributions for individual images using the IC strategy. In this case global attributions make little sense to report, which is why we report local explanations. Our explanations seem to focus on critical aspects such as wings of the plane, the body and face of the horse, and the face and ears of the dog. More such explanations are again provided in the supplement. Figures \ref{fig:senti} and \ref{fig:quora} depict the most important words, filtering out articles, prepositions and auxiliary verbs, that are highlighted by our CoFrNet-DL models on the Sentiment and Quora datasets. It makes sense that words such as ``good", ``bad" and ``like" should play an important role in gauging sentiment, while words such as ``some", ``people" especially taken together, could indicate sarcasm/insincerity. More discussion of these results is provided in the supplement.

\begin{figure}[t]
\begin{subfigure}[b]{0.33\textwidth}
        \vspace{0pt}
        \centering
        \includegraphics[width=\textwidth]{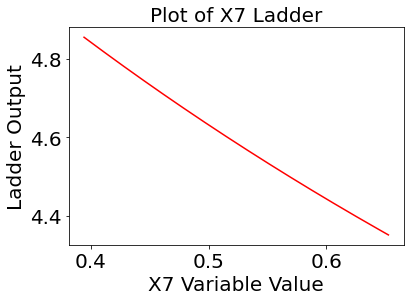}
        \caption{Waveform}
        \label{fig:wave}
    \end{subfigure}
    \begin{subfigure}[b]{0.31\textwidth}
        \vspace{0pt}
        \centering
        \includegraphics[width=\textwidth]{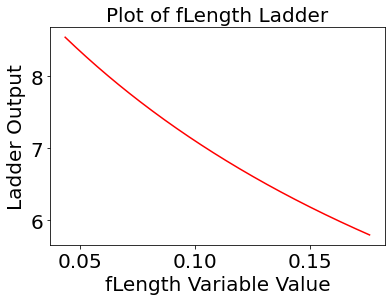}
        \caption{Magic}
        \label{fig:magic}
    \end{subfigure}
    \begin{subfigure}[b]{0.34\textwidth}
        \vspace{0pt}
        \centering
        \includegraphics[width=\textwidth]{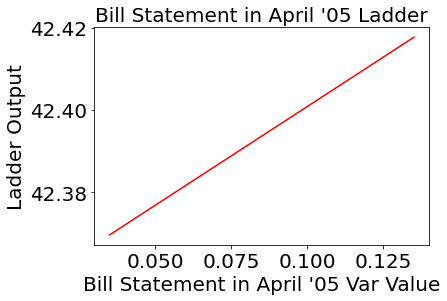}
        \caption{Credit Card}
        \label{fig:credit}
    \end{subfigure}\\\\
\centering
\begin{subfigure}[b]{0.2\textwidth}
        \vspace{0pt}
        \centering
        \includegraphics[width=\textwidth]{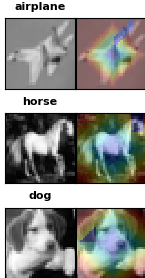}
        \caption{CIFAR10}
        \label{fig:cifar}
    \end{subfigure}
 \hspace{2cm}
 \begin{subfigure}[b]{0.2\textwidth}
        \vspace{0pt}
        \centering
        \includegraphics[width=\textwidth]{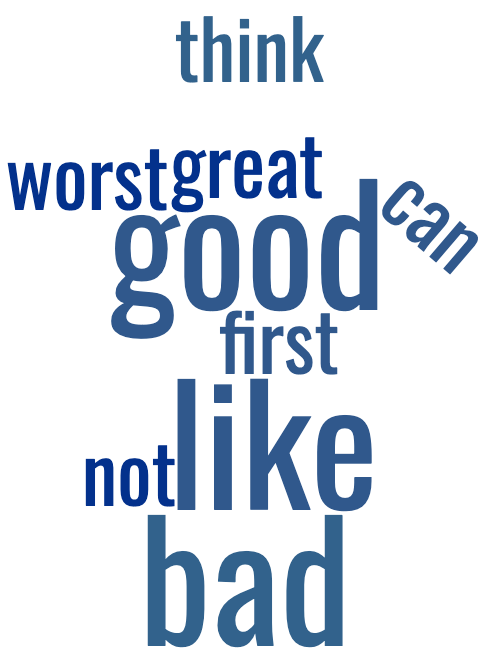}
        \caption{Sentiment}
        \label{fig:senti}
    \end{subfigure}
\hspace{1.8cm}
\begin{subfigure}[b]{0.2\textwidth}
        \vspace{0pt}
        \centering
        \includegraphics[width=\textwidth]{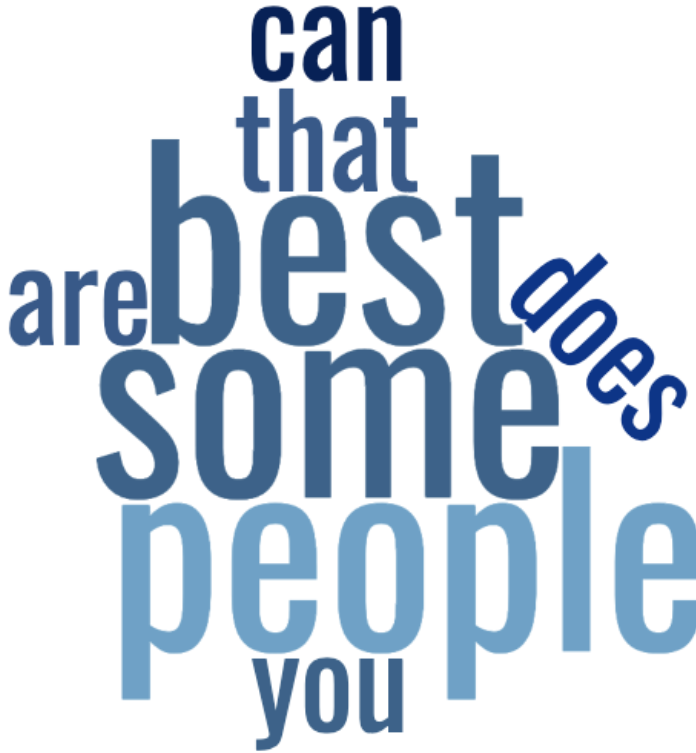}
        \caption{Quora}
        \label{fig:quora}
    \end{subfigure}
\caption{In the first row for the three tabular datasets (a-c), we see the behavior of the most important features as indicated by our CoFrNet-DL model. In the second row, we first see (d) local explanations for CIFAR10 using the IC strategy. Shades closer to blue indicate high importance, while those closer to red imply low importance (colormap in the supplement). The next two figures (e, f) are word clouds highlighting the words our CoFrNet-DL model considers as most important. More example explanations are provided in the supplement.}
\label{fig:expl}
\end{figure}

\section{Discussion}
\label{sec:disc}

In this paper, we have proposed CoFrNets, a new neural architecture inspired by continued fractions. We have theoretically shown its universal approximation ability, empirically shown its competence on real-world datasets where we are either competitive or much better than other interpretable models as well as MLPs, and analytically shown how to tease interpretability out from it. The training optimization relies on specific properties of CFs: while CFs are rational functions, optimizing rational functions directly leads to either exponentially-many coefficients or constraints that are difficult to enforce, but these pitfalls do not happen when the CF representation is used. Similarly, interpretation is obtained efficiently and naturally by leveraging the theory of CFs. 

We hypothesize that CoFrNets have favorable adversarial robustness properties due to their functional simplicity and we intend to test this in future work. The hypothesis arises from the following argument. Ladders of small depth, which we find to have good accuracy, yield smooth low-degree rational functions that make it difficult for small input perturbations to produce large changes in output. We may even be able to analytically compute adversarial robustness metrics akin to the CLEVER score for CoFrNets \cite{weng2018evaluating}.

Beyond this initial proposal of a new architecture, we admit there is room for improvement to achieve empirical accuracies that match or outdo state-of-the-art black-box models across modalities. 
Since CoFrNet can be viewed as a neural architecture, we have been able to exploit the well-developed tools available to train neural architectures. It is possible however that different training strategies such as those mentioned in Section~\ref{sec:archi} could be advantageous: each ladder could be built rung by rung, or a linear combination of ladders could be built incrementally. Similarly, while the $\epsilon$ modification of the $1/z$ function is a practical solution to avoid singularities, it is possible that it also limits the expressiveness of the function, and more advanced ways to defeat poles \cite{multiratfn} could be less restrictive. In addition, known tactics from other neural architectures (convolutional blocks, pooling, maxout) or even something new may be warranted. 
It is possible that convolutional blocks may have a natural implementation based on older filter design literature in signal processing that builds upon CFs \cite{mitra1972canonic}.

\section*{Funding and Conflicts of Interest}
All authors were employed by IBM Corporation when this work was conducted. There were no other sources of funding.

\bibliography{main}
\bibliographystyle{abbrv}

\clearpage
\appendix

\begin{figure}[htbp]
\centering
\includegraphics[width=0.5\textwidth]{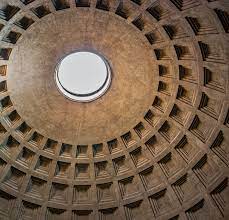}     
  \caption{Above (\url{https://en.wikipedia.org/wiki/Coffer}) we see an example of a coffer in building architecture, which is a series of (square) sunken panels.}
\label{fig:coffer}
\end{figure}

\begin{table}[htbp]
\centering
\caption{Public dataset characteristics, where $N$ denotes dataset size and $p$ is the dimensionality. For the last two text datasets $p$ is based on our glove embedding.}
\begin{tabular}{|c|c|c|c|c|}
  \hline
    {Dataset} & Modality & {$N$} & {$p$} & {\# of Classes}  \\
 \hline
 \hline
     {Credit Card} & Tabular & {30K} & {24} & {2}  \\
     \hline
    {Magic} & Tabular & {19020} & {11} & {2}  \\
 \hline
   {Waveform} & Tabular & {5K} & {40} & {3}  \\
\hline
        {CIFAR-10} & Image & {60K} & {$32\times32$} & {10}  \\
        \hline
        {Sentiment} & Text & {50K} & {12.5K} & {2}  \\
        \hline
        {Quora} & Text & {99933} & {4K} & {2}  \\
        \hline
        {ImageNet} & Image & {14M} & {$224\times224$} & {1000}  \\
        \hline
  \end{tabular}
 \label{tab:data}
\end{table}

\section{(Additional) Real Data Details}
Table \ref{tab:data} shows the real dataset characteristics. For the Sentiment dataset each word had a 50 dimensional embedding where the (max) sentence length was set to 250 making the dimensionality of a particular input to be 12,500. For Quora, each word had a 20 dimensional embedding and the (max) sentence length was set to 200 making the dimensionality of a particular input to be 4,000.

The top three attributions for Waveform were X7, X11 and X33. For Quora they were ``some", ``people" and ``best". Interestingly, the least important words in Quora were inquisitive verbs such as ``Why'', ``What", ``How'', ``Can", ``If" and ``Which". This is understandable as those are present in (almost) every question (or input) and are thus not helpful in distinguishing insincere questions from actual ones.

\section{MLP vs CoFrNet-F on Synthetic Functions}
We now compare the performance of MLPs to CoFrNet-F on well known synthetic functions given in Table \ref{tab:syndefs}. We consider single ladder CoFrNet-F, whose depth we set to be equal to the degree of the function if it is a polynomial, else we set it to six. For a fair comparison we also set the depth of the MLP to be the same as our architecture. The width for the MLP is then set so that the number of parameters in it are as similar to ours as possible. To do this we state the following simple formulas that connect depth, width and number of parameters.

If $p$ is the dimensionality of the input, $q$ the dimensionality of the output, $d$ the depth and $L$ the width (i.e. number of hidden nodes for MLP or number of ladders for CoFrNet) then,

CoFrNet-F: \# of Parameters  $ = pL(d-1) + Lq$, MLP: \# of Parameters  $ = pL + (d-2)L^2 + Lq$

We see in Table \ref{tab:synperf}, that we outperform MLP in majority of the cases showcasing the power of our architecture.

\begin{table}[t]
\centering
\caption{Below we see the (un-normalized) functional form of 10 different functions that we perform (synthetic) experiments on \cite{optfns}.}
\begin{tabular}{|c|c|}
\hline
Function & Formula \\ \hline \hline
Beale             & $(1.5 - x + xy)^2 + (2.25-x+xy^2)^2 + (2.625 - x + xy^3)^2$ \\ \hline
Goldstein–Price & \begin{tabular}[c]{@{}c@{}}$(1+(x+y+1)^2(19-14x+3x^2-14y+6xy+3y^2))\times$ \\ $(30+(2x-3y)^2(18-32x+12x^2+48y-36xy+27y^2))$\end{tabular} \\ \hline
Booth             & $(x+2y - 7)^2 + (2x + y - 5)^2$                             \\ \hline
Cross In Tray   & $-.0001(|\sin(x)\sin(y)\exp(|100 - \frac{\sqrt{x^2+y^2}}{\pi}|)| + 1)^{0.1}$                                                             \\ \hline
Three Hump Camel  & $2x^2 - 1.05x^4 + \frac{x^6}{6} + xy + y^2$                 \\ \hline
Himmelblau        & $(x^2 + y - 11)^2 + (x + y^2 - 7)^2$                        \\ \hline
Bukin N6          & $ 100 \sqrt{|y - .01x^2|} + .01|x+10|$              \\ \hline
Matya's           & $.26(x^2 + y^2) - .48xy$                           \\ \hline
Levi N13        & $ \sin^2(3\pi x) + (x-1)^2 ( 1 + \sin^2(3 \pi y) + (y-1)^2 (1 + \sin^2(2 \pi y))$                                        \\ \hline
Rosenbrock        & $(1-x)^2 + 100(y-x^2)^2$                            \\ \hline
\end{tabular}
\label{tab:syndefs}
\end{table}

\begin{table}[t]
\centering
\caption{Below we see the mean absolute percentage error (MAPE), where the percentage is with respect to the $\max-\min$ range of function values amongst the sampled examples, of (single ladder) CoFrNet-F and MLP with same depth and with similar number of parameters on the 10 well known synthetic functions. Best results are bolded.}
\begin{tabular}{|c|c|c|}
\hline
Function & CoFrNet-F & MLP \\ \hline\hline
Beale             & \textbf{16.512}                             & 19.480                \\ \hline
Goldstein–Price   & \textbf{12.045}                             & 20.966                \\ \hline
Booth             & \textbf{11.885}                             & 21.932                \\ \hline
Cross In Tray     & 9.926                              & \textbf{5.591}                 \\ \hline
Three Hump Camel  & \textbf{36.250}                             & 37.315                \\ \hline
Himmelblau        & \textbf{25.545}                             & 34.420                \\ \hline
Bukin N6          & \textbf{20.073}                             & 40.795                \\ \hline
Matya's           & \textbf{7.311}                              & 15.525                \\ \hline
Levi N13          & \textbf{24.873}                             & 28.751                \\ \hline
Rosenbrock        & \textbf{13.080}                             & 44.520                \\ \hline
\end{tabular}
\label{tab:synperf}
\end{table}

\section{Proof of Proposition \ref{prop:dCF/dx}}
\begin{proof}
The proposition follows from the chain rule and Lemma~\ref{lem:dCF/da} below:
\begin{align*}
 \frac{\partial f(x; w)}{\partial x_j} &= \sum_{k=0}^d \frac{\partial}{\partial a_k} \frac{K_{d+1}(a_0, \dots, a_d)}{K_d(a_1, \dots, a_d)} \frac{\partial a_k}{\partial x_j} = \sum_{k=0}^d \frac{\partial}{\partial a_k} \frac{K_{d+1}(a_0, \dots, a_d)}{K_d(a_1, \dots, a_d)} w_{jk}.
\end{align*}
\end{proof}

\begin{lemma}\label{lem:dCF/da}
We have 
\[
    \frac{\partial}{\partial a_k} \frac{K_{d+1}(a_0, \dots, a_d)}{K_d(a_1, \dots, a_d)} = (-1)^k \left(\frac{K_{d-k}(a_{k+1},\dots,a_d)}{K_d(a_1,\dots,a_d)}\right)^2.
\]
\end{lemma}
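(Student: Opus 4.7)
The plan is to avoid direct manipulation of the continuant polynomials and instead work with the ``tail'' continued fractions, using the chain rule recursively. Concretely, define $h_{d+1}=0$ and $h_j = 1/(a_j + h_{j+1})$ for $j=d, d-1, \ldots, 1$, so that $h_j$ is the tail CF $\frac{1}{a_j+}\frac{1}{a_{j+1}+}\cdots\frac{1}{a_d}$ depending only on $a_j,\ldots,a_d$. Then $f(x;w) = a_0 + h_1$, and the ratio-of-continuants identity \eqref{eqn:CFcontRatio} applied to the tail gives
\[
h_j \;=\; \frac{K_{d-j}(a_{j+1},\dots,a_d)}{K_{d-j+1}(a_j,\dots,a_d)}, \qquad j=1,\dots,d.
\]

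For $k=0$ the claim reduces to $\partial f/\partial a_0 = 1 = (K_d/K_d)^2$, which is immediate. For $k\ge 1$, I would iterate the chain rule on $h_1 = 1/(a_1 + h_2)$. Since $h_{j+1}$ is the only piece of $h_j$ depending on $a_k$ when $k>j$, one gets
\[
\frac{\partial h_j}{\partial a_j} = -h_j^2, \qquad \frac{\partial h_j}{\partial a_k} = -h_j^2\,\frac{\partial h_{j+1}}{\partial a_k} \text{ for } k>j,
\]
and by induction
\[
\frac{\partial f}{\partial a_k} = \frac{\partial h_1}{\partial a_k} = (-1)^k \bigl(h_1 h_2 \cdots h_k\bigr)^2.
\]

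The final step is to observe that the product $h_1 h_2 \cdots h_k$ telescopes: the numerator $K_{d-j}(a_{j+1},\dots,a_d)$ of $h_j$ cancels with the denominator of $h_{j+1}$, leaving
\[
h_1 h_2 \cdots h_k \;=\; \frac{K_{d-k}(a_{k+1},\dots,a_d)}{K_d(a_1,\dots,a_d)},
\]
which plugged into the previous display gives exactly the lemma. The case $k=d$ is handled by the convention $K_0 = 1$, consistent with $h_{d+1}=0$ and $h_d = 1/a_d$.

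The only real obstacle is bookkeeping: the continuants are indexed from the right end of the sequence and their argument lists shift under the recursion \eqref{eqn:recurCont}, so one must be careful that the identity $h_j = K_{d-j}/K_{d-j+1}$ (and hence the telescoping) respects the indexing. This can be verified cleanly by induction on $d-j$ using the base case $h_d = 1/a_d = K_0/K_1(a_d)$ and the recursion $h_j = 1/(a_j + h_{j+1})$, which after clearing the inner fraction reproduces exactly the continuant recursion from the statement. Once this identification is in place, the rest of the argument is a one-line telescoping calculation.
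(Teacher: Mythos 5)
Your proof is correct, and it takes a genuinely different route from the paper. The paper works entirely on the polynomial side: it represents each continuant as the determinant of a tridiagonal matrix, differentiates via cofactors to get $\partial K_k/\partial a_l = K_{l-1}(a_1,\dots,a_{l-1})K_{k-l}(a_{l+1},\dots,a_k)$, applies the quotient rule to $K_{d+1}/K_d$, and then reduces the numerator to a separate continuant identity, $K_k(a_0,\dots,a_{k-1})K_d(a_1,\dots,a_d)-K_{k-1}(a_1,\dots,a_{k-1})K_{d+1}(a_0,\dots,a_d)=(-1)^k K_{d-k}(a_{k+1},\dots,a_d)$, proved by induction. You instead stay on the continued-fraction side: the chain rule on the tails $h_j=1/(a_j+h_{j+1})$ gives $\partial f/\partial a_k=(-1)^k(h_1\cdots h_k)^2$ immediately, and the identification $h_j=K_{d-j}(a_{j+1},\dots,a_d)/K_{d-j+1}(a_j,\dots,a_d)$ (your induction is exactly the recursion \eqref{eqn:recurCont}) makes the product telescope to $K_{d-k}/K_d$. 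Your identification of $\partial/\partial a_k$ of the continuant ratio with $\partial/\partial a_k$ of the CF is legitimate because \eqref{eqn:CFcontRatio} is an identity of rational functions in the independent variables $a_0,\dots,a_d$; the only caveat is that the chain-rule computation is carried out on the open set where all intermediate denominators $a_j+h_{j+1}$ (equivalently the tail continuants $K_{d-j+1}(a_j,\dots,a_d)$) are nonzero, and then extends wherever $K_d\neq 0$ since both sides are rational — worth one sentence if you write this up. What the two approaches buy: yours is shorter and more transparent, exposing the answer as the square of a telescoping product of tails and avoiding the determinant/cofactor machinery and the auxiliary identity lemma entirely; the paper's version yields two reusable byproducts (the cofactor formula for $\partial K_k/\partial a_l$ and the determinant-style continuant identity) and never leaves the ring of polynomials, so no genericity remark is needed.
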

\begin{proof}
To compute the partial derivative of the ratio of continuants above, we first determine the partial derivative of a single continuant $K_k(a_1,\dots,a_k)$ with respect to $a_l$, $l = 1,\dots,k$. We use the representation of $K_k$ as the determinant of the following tridiagonal matrix:
\begin{equation}\label{eqn:contDet}
K_k(a_1, \dots, a_k) = \det 
\begin{bmatrix}
a_1 & 1 \\
-1 & a_2 & \ddots \\
& \ddots & \ddots & 1 \\
& & -1 & a_k
\end{bmatrix}.
\end{equation}
The partial derivatives of a determinant with respect to the matrix entries are given by the \emph{cofactor} matrix:
\[
\frac{\partial\det A}{\partial A_{ij}} = \mathrm{co}(A)_{ij},
\]
where $\mathrm{co}(A)_{ij} = (-1)^{i+j} M_{ij}$ and $M_{ij}$ is the $(i,j)$-minor of $A$. In the present case, with $A$ as the matrix in \eqref{eqn:contDet}, we require partial derivatives with respect to the diagonal entries. Hence 
\[
\frac{\partial K_k(a_1,\dots,a_k)}{\partial a_l} = M_{ll}.
\]
In deleting the $l$th row and column from $A$ to compute $M_{ll}$, we obtain a block-diagonal matrix where the two blocks are tridiagonal and correspond to $a_1,\dots,a_{l-1}$ and $a_{l+1},\dots,a_k$. Applying \eqref{eqn:contDet} to these blocks thus yields 
\begin{equation}\label{eqn:dK/da}
\frac{\partial K_k(a_1,\dots,a_k)}{\partial a_l} = K_{l-1}(a_1,\dots,a_{l-1}) K_{k-l}(a_{l+1},\dots,a_k).
\end{equation}

Returning to the ratio of continuants in the lemma, we use the quotient rule for differentiation and \eqref{eqn:dK/da} to obtain 
\begin{align}
    \frac{\partial}{\partial a_k} \frac{K_{d+1}(a_0, \dots, a_d)}{K_d(a_1, \dots, a_d)} &= \frac{1}{K_d(a_1,\dots,a_d)^2} \left( \frac{\partial K_{d+1}(a_0,\dots,a_d)}{\partial a_k} K_d(a_1,\dots,a_d) \right.\nonumber\\
    &\qquad \qquad {} \left. - K_{d+1}(a_0,\dots,a_d) \frac{\partial K_d(a_1,\dots,a_d)}{\partial a_k} \right)\nonumber\\
    &= \frac{K_{d-k}(a_{k+1},\dots,a_d)}{K_d(a_1,\dots,a_d)^2} \left( K_k(a_0,\dots,a_{k-1}) K_d(a_1,\dots,a_d) \right.\nonumber\\
    &\qquad \qquad {} \left. - K_{d+1}(a_0,\dots,a_d) K_{k-1}(a_1,\dots,a_{k-1}) \right).\label{eqn:dCF/da_1}
\end{align}
We focus on the quantity 
\begin{equation}\label{eqn:dCF/da_2}
    K_k(a_0,\dots,a_{k-1}) K_d(a_1,\dots,a_d) - K_{k-1}(a_1,\dots,a_{k-1}) K_{d+1}(a_0,\dots,a_d)
\end{equation}
in \eqref{eqn:dCF/da_1}. For $k = 0$ (and taking $K_{-1} = 0$), this reduces to $K_d(a_1,\dots,a_d)$. Equation~\eqref{eqn:dCF/da_1} then gives 
\[
\frac{\partial}{\partial a_0} \frac{K_{d+1}(a_0, \dots, a_d)}{K_d(a_1, \dots, a_d)} = \left(\frac{K_{d}(a_1,\dots,a_d)}{K_d(a_1,\dots,a_d)}\right)^2 = 1,
\]
in agreement with the fact that $a_0$ appears only as the leading term in \eqref{eqn:CFcontRatio}. For $k = 1$, \eqref{eqn:dCF/da_2} becomes 
\[
a_0 K_d(a_1,\dots,a_d) - K_{d+1}(a_0,\dots,a_d) = -K_{d-1}(a_2,\dots,a_d)
\]
using \eqref{eqn:recurCont}, and hence 
\[
\frac{\partial}{\partial a_1} \frac{K_{d+1}(a_0, \dots, a_d)}{K_d(a_1, \dots, a_d)} = -\left(\frac{K_{d-1}(a_2,\dots,a_d)}{K_d(a_1,\dots,a_d)}\right)^2.
\]
We generalize from the cases $k = 0$ and $k = 1$ with the following lemma.
\begin{lemma}\label{lem:contIdentity}
The following identity holds:
\begin{multline*}
K_k(a_0,\dots,a_{k-1}) K_d(a_1,\dots,a_d) - K_{k-1}(a_1,\dots,a_{k-1}) K_{d+1}(a_0,\dots,a_d)\\ = (-1)^k K_{d-k}(a_{k+1},\dots,a_d).
\end{multline*}
\end{lemma}
Combining \eqref{eqn:dCF/da_1} and Lemma~\ref{lem:contIdentity} completes the proof.
\end{proof}
\begin{proof}[Proof of Lemma~\ref{lem:contIdentity}]
We prove the lemma by induction. The base cases $k = 0$ and $k = 1$ were shown above and hold moreover for any depth $d$ and any sequence $a_0, \dots, a_d$. Assume then that the lemma is true for some $k$, any $d$, and any $a_0,\dots,a_d$. For $k+1$, we use recursion~\eqref{eqn:recurCont} to obtain
\begin{align*}
    &K_{k+1}(a_0,\dots,a_{k}) K_d(a_1,\dots,a_d) - K_{k}(a_1,\dots,a_{k}) K_{d+1}(a_0,\dots,a_d)\\
    &\quad = \bigl(a_0 K_k(a_1,\dots,a_k) + K_{k-1}(a_2,\dots,a_k)\bigr) K_d(a_1,\dots,a_d)\\
    &\quad \qquad {} - K_{k}(a_1,\dots,a_{k}) \bigl(a_0 K_d(a_1,\dots,a_d) + K_{d-1}(a_2,\dots,a_d)\bigr)\\
    &\quad = K_{k-1}(a_2,\dots,a_k) K_d(a_1,\dots,a_d) - K_{k}(a_1,\dots,a_{k}) K_{d-1}(a_2,\dots,a_d).
\end{align*}
We then recognize the last line as an instance of the identity for $k$, depth $d-1$, and sequence $a_1,\dots,a_d$. Applying the inductive assumption, 
\begin{align*}
    &K_{k+1}(a_0,\dots,a_{k}) K_d(a_1,\dots,a_d) - K_{k}(a_1,\dots,a_{k}) K_{d+1}(a_0,\dots,a_d)\\
    &\quad = -(-1)^k K_{d-1-k}(a_{k+2},\dots,a_d)\\
    &\quad = (-1)^{k+1} K_{d-(k+1)}(a_{(k+1)+1},\dots,a_d),
\end{align*}
as required.
\end{proof}
\newpage
\begin{figure}[htbp]
\centering
\includegraphics[width=\textwidth]{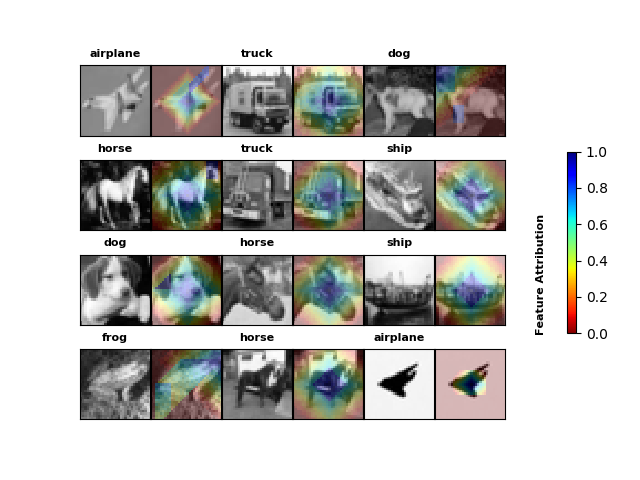}
\includegraphics[width=\textwidth]{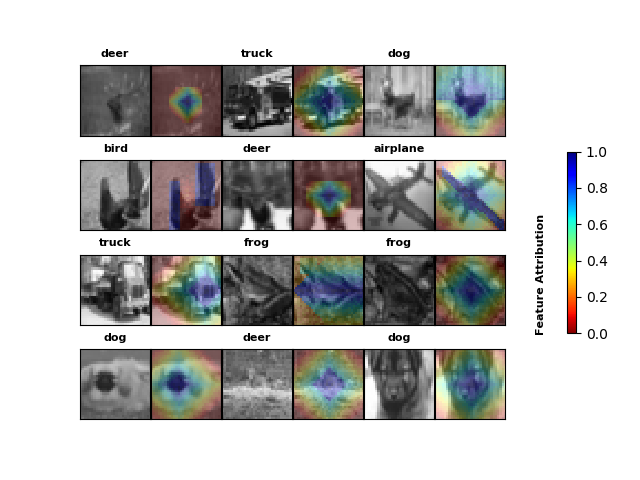} 
  \caption{Above we see 24 randomly chosen CIFAR10 test images (in grey scale) and to the immediate right of each their corresponding (normalized) attributions overlayed as a colormap over each of them using the IC strategy. We see that in many cases meaningful aspects are highlighted as important (blue color) in the respective images such as wings for airplanes, face and body parts for animals and frontal frame for trucks.}
\label{fig:cifarf}
\end{figure}

\begin{figure}[htbp]
\includegraphics[width=0.33\textwidth]{figs/waveform_x7.png}
\includegraphics[width=0.33\textwidth]{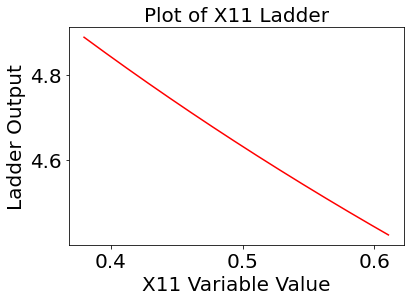}
\includegraphics[width=0.33\textwidth]{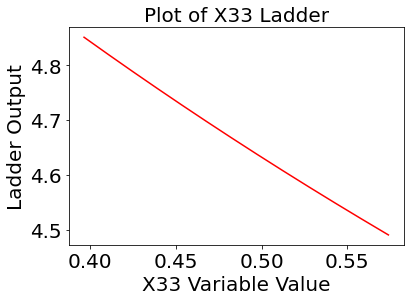}
\caption{Above we see plots of the functions that represent the three most important variables for the Waveform Dataset: X7, X11, and X33. }
\label{fig:wavef}
\end{figure}

\begin{figure}[htbp]
\includegraphics[width=0.31\textwidth]{figs/creditCard_billStatementApril05.png}
\includegraphics[width=0.32\textwidth]{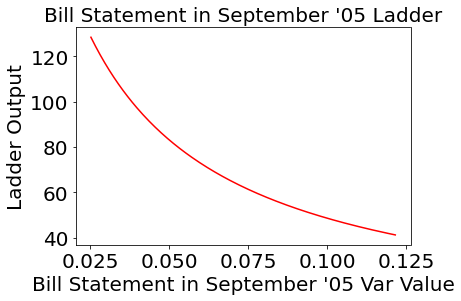}
\includegraphics[width=0.34\textwidth]{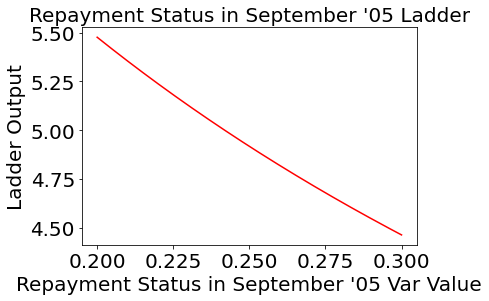}
\caption{Above we see plots of the functions that represent the three most important variables for the Credit Card Dataset: Amount of Bill Statement in April 2005, Repayment Status in September 2005 and Amount of Bill Statement in September 2005.}
\label{fig:creditf}
\end{figure}

\begin{figure}[htbp]
\includegraphics[width=0.33\textwidth]{figs/magic_fLength.png}
\includegraphics[width=0.33\textwidth]{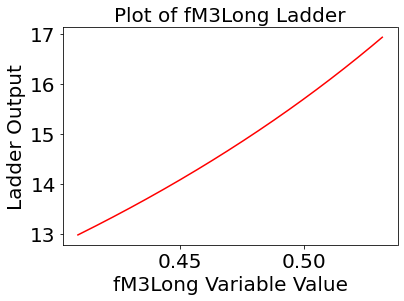}
\includegraphics[width=0.33\textwidth]{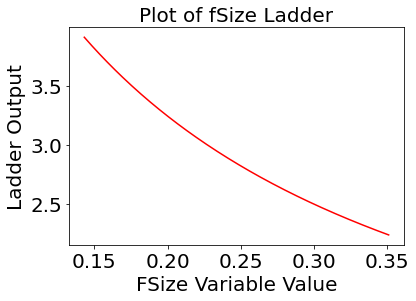}
\caption{Above we see plots of the functions that represent the three most important variables for the MAGIC Telescope Dataset: FLength, FM3Long and FSize.}
\label{fig:magicf}
\end{figure}


\end{document}